\newtheorem{theorem}{Theorem}
\newtheorem{lemma}{Lemma}
\newtheorem{corollary}{Corollary}
\begin{document}

\title{Minimax Lower Bounds for Ridge Combinations Including Neural Nets}

\author{Jason M. Klusowski and Andrew R. Barron \\
Department of Statistics \\ Yale University \\ New Haven, CT, USA \\
Email: \{jason.klusowski, andrew.barron\}@yale.edu}

\maketitle

\begin{abstract}
Estimation of functions of $ d $ variables is considered using ridge combinations of the form $ \textstyle\sum_{k=1}^m c_{1,k} \phi(\textstyle\sum_{j=1}^d c_{0,j,k}x_j-b_k) $ where the activation function $ \phi $ is a function with bounded value and derivative. These include single-hidden layer neural networks, polynomials, and sinusoidal models. From a sample of size $ n $ of possibly noisy values at random sites $ X \in B = [-1,1]^d $, the minimax mean square error is examined for functions in the closure of the $ \ell_1 $ hull of ridge functions with activation $ \phi $. It is shown to be of order $ d/n $ to a fractional power (when $ d $ is of smaller order than $ n $), and to be of order $ (\log d)/n $ to a fractional power (when $ d $ is of larger order than $ n $). Dependence on constraints $ v_0 $ and $ v_1 $ on the $ \ell_1 $ norms of inner parameter $ c_0 $ and outer parameter $ c_1 $, respectively, is also examined. Also, lower and upper bounds on the fractional power are given. The heart of the analysis is development of information-theoretic packing numbers for these classes of functions.
\end{abstract}

\begin{keywords}
Nonparametric regression; nonlinear regression; neural nets; penalization; machine learning; high-dimensional data analysis; learning theory; generalization error; greedy algorithms; metric entropy; packing sets; polynomial nets; sinusoidal nets; constant weight codes
\end{keywords}

\section{Introduction}

Ridge combinations provide flexible classes for fitting functions of many variables. The ridge activation function may be a general Lipschitz function. When the ridge activation function is a sigmoid, these are single-hidden layer artificial neural nets. When the activation is a sine or cosine function, it is a sinusoidal model in a ridge combination form. We consider also a class of polynomial nets which are combinations of Hermite ridge functions. Ridge combinations are also the functions used in projection pursuit regression fitting. What distinguishes these models from other classical functional forms is the presence of parameters internal to the ridge functions which are free to be adjusted in the fit. In essence, it is a parameterized, infinite dictionary of functions from which we make linear combinations. This provides a flexibility of function modeling not present in the case of a fixed dictionary. Here we discuss results on risk properties of estimation of functions using these models and we develop new minimax lower bounds.

For a given activation function $ \phi(z) $ on $ \mathbb{R} $, consider the parameterized family $ \mathcal{F}_m $ of functions
\begin{equation} \label{eq:ridge}
f_m(x) = f_m(x, c_0, c_1, b)  = \textstyle\sum_{k=1}^m c_{1,k} \phi(\textstyle\sum_{j=1}^d c_{0,j,k}x_j-b_k),
\end{equation}
where $ c_1 = (c_{1,1}, \dots, c_{1,m})^{\prime} $ is the vector of outer layer parameters and $ c_{0,k} = (c_{0,1,k}, \dots, c_{0,d,k})^{\prime} $ are the vectors of inner parameters for the single hidden-layer of functions $ \phi(c_{0,k}\cdot x-b_k) $ with horizontal shifts $ b = (b_1,\dots,b_m) $, $ k = 1,\dots,m $. For positive $ v_0 $, let
\begin{equation}
\mathcal{D}_{v_0} = \mathcal{D}_{v_0, \phi} = \{ \phi(\theta \cdot x - t),\; x \in B: \|\theta\|_1 \leq v_0,\; t\in\mathbb{R} \}
\end{equation}
be the dictionary of all such inner layer ridge functions $ \phi(\theta \cdot x - t) $ with parameter restricted to the $ \ell_1 $ ball of size $ v_0 $ and variables $ x $ restricted to the cube $ [-1,1]^d $. The choice of the $ \ell_1 $ norm on the inner parameters is natural as it corresponds to $ \|\theta\|_B = \sup_{x\in B}|\theta\cdot x| $ for $ B = [-1,1]^d $. 

Let $ \mathcal{F}_{v_0, v_1} = \mathcal{F}_{v_0, v_1, \phi} = \ell_1(v_1, \mathcal{D}_{v_0}) $ be the closure of the set of all linear combinations of functions in $ \mathcal{D}_{v_0} $ with $ \ell_1 $ norm of outer coefficients not more than $ v_1 $. These $ v_0 $ and $ v_1 $ control the freedom in the size of this function class. They can either be fixed for minimax evaluations, or adapted in the estimation (as reflected in some of the upper bounds on risk for penalized least square estimation). The functions of the form \prettyref{eq:ridge} are in $ \ell_1(v_1, \mathcal{D}) $ when $ \|c_{0,k}\|_1 \leq v_0 $ and $ \|c_1\|_1 \leq v_1 $. Indeed, let $ \mathcal{F}_{m,v_0,v_1} = \ell_1(m,v_1, \mathcal{D}_{v_0}) $ be the subset of such functions in $ \ell_1(v_1, \mathcal{D}_{v_0}) $ that use $ m $ terms.

Data are of the form $ \{ (X_i, Y_i) \}_{i=1}^n $, drawn independently from a joint distribution $ P_{X,Y} $ with $ P_X $ on $ [-1,1]^d $. The target function is $ f(x) = \mathbb{E}[Y|X=x] $, the mean of the conditional distribution $ P_{Y|X=x} $, optimal in mean square for the prediction of future $ Y $ from corresponding input $ X $. In some cases, assumptions are made on the error of the target function $ \epsilon_i = Y_i - f(X_i) $ (i.e. bounded, Gaussian, or sub-Gaussian). 

From the data, estimators $ \hat{f}(x) = \hat{f}(x,\{ (X_i, Y_i) \}_{i=1}^n) $ are formed and the loss at a target $ f $ is the $ L_2(P_X) $ square error $ \|f-\hat{f}\|^2 $ and the risk is the expected squared error $ \mathbb{E}\|f-\hat{f}\|^2 $. For any class of functions $ \mathcal{F} $ on $ [-1,1]^d $, the minimax risk is 
\begin{equation} \label{eq:minimax}
R_{n,d}(\mathcal{F}) = \inf_{\hat{f}} \sup_{f\in \mathcal{F}} \mathbb{E}\|f-\hat{f}\|^2,
\end{equation}
where the infimum runs over all estimators $ \hat{f} $ of $ f $ based on the data $ \{ (X_i, Y_i) \}_{i=1}^n $.

It is known that for certain complexity penalized least squares estimators \cite{Barron1994}, \cite{Barron2008-2}, \cite{Barron2008}, \cite{Barron1999-2} the risk satisfies
\begin{equation} \label{eq:risk1}
\mathbb{E}\|f-\hat{f}\|^2 \leq \inf_{f_m\in\mathcal{F}_{m}} \{ \|f-f_m\|^2 + \tfrac{cmd\log n}{n} \},
\end{equation}
where the constant $ c $ depends on parameters of the noise distribution and on properties of the activation function $ \phi $, which can be a step function or a fixed bounded Lipschitz function.  The $ d\log n $ in the second term is from the log-cardinality of customary $ d $-dimensional covers of the dictionary. The right side is an index of resolvability expressing the tradeoff between approximation error $ \|f-f_m\|^2 $ and descriptive complexity $ md\log n $ relative to sample size, in accordance with risk bounds for minimum description length criteria \cite{Barron1991}, \cite{Barron2008-3}, \cite{Barron2008-4}, \cite{Barron2014}. When the target $ f $ is in $ \mathcal{F}_{v_1,v_0} $, it is known as in \cite{Lee1992}, \cite{Barron1993}, \cite{Breiman1993} that $ \|f-f_m\|^2 \leq v^2_1/m $ with slight improvements possible depending on the dimension $ \|f-f_m\|^2 \leq v^2_1/m^{1/2+1/d} $ as in \cite{Makovoz1996}, \cite{Klusowski2016}, \cite{Barron1999}. When $ f $ is not in $ \mathcal{F}_{v_0,v_1} $ , let $ f_{v_0,v_1} $ be its projection onto this convex set of functions. Then the additional error beyond $ \|f-f_{v_0,v_1}\|^2 $ is controlled by the bound
\begin{equation} \label{eq:risk2}
\inf_{m} \{ \tfrac{v^2_1}{m} + \tfrac{c_1md\log n}{n} \} = 2v_1(\tfrac{c_1d\log n}{n})^{1/2}.
\end{equation}
Moreover, with $ \hat{f} $ restricted to $ \mathcal{F}_{v_0,v_1} $, this bounds the mean squared error $ \mathbb{E}\|\hat{f}-f_{v_0,v_1}\|^2 $ from the projection. The same risk is available from $ \ell_1 $ penalized least square estimation \cite{Barron2008}, \cite{Barron2008-3}, \cite{Barron2008-4}, \cite{Klusowski2016} and from greedy implementations of complexity and $ \ell_1 $ penalized estimation \cite{Barron2008}, \cite{Klusowski2016}. The slight approximation improvements (albeit not known whether available by greedy algorithms) provide the risk bound \cite{Klusowski2016}
\begin{equation} \label{eq:upperlowdim}
R_{n,d}(\mathcal{F}_{v_0,v_1}) \leq c_2(\tfrac{dv^2_0v^2_1}{n})^{1/2+1/(2(d+1))},
\end{equation}
for bounded Lipschitz activation functions $ \phi $, improving a similar result in \cite{Xiaohong1999}, \cite{Barron1999}.
This fact can be shown through improved upper bounds on the metric entropy from \cite{Mendelson2002}.

A couple of lower bounds on the minimax risk in $ \mathcal{F}_{v_0,v_1} $ are known \cite{Barron1999} and, improving on \cite{Barron1999}, the working paper \cite{Klusowski2016} states the lower bound
\begin{equation} \label{eq:lowerhighdim0}
R_{n,d}(\mathcal{F}_{v_0,v_1}) \geq c_3v^{d/(d+2)}_1(\tfrac{1}{d^4n})^{1/2+1/(d+2))}
\end{equation}
for an unconstrained $ v_0 $.

Note that for large $ d $, these exponents are near $ 1/2 $. Indeed, if $ d $ is large compared to $ \log n $, then the bounds in \prettyref{eq:upperlowdim} and \prettyref{eq:lowerhighdim0} are of the same order as with exponent $ 1/2 $. It is desirable to have improved lower bounds which take the form $ d/n $ to a fractional power as long as $ d $ is of smaller order than $ n $.

Favorable performance of flexible neural network (and neural network like) models has often been observed as in \cite{LeCun2015} in situations in which $ d $ is of much larger order than $ n $. Current developments \cite{Klusowski2016} are obtaining upper bounds on risk of the form
\begin{equation} \label{eq:upperhighdim}
R_{n,d}(\mathcal{F}_{v_0,v_1}) \leq c_4(\tfrac{v^2_0v^4_1\log(d+1)}{n})^{\gamma},
\end{equation}
for fixed positive $ \gamma $, again for bounded Lipschitz $ \phi $. These allow $ d $ much larger than $ n $, as long as $ d = e^{o(n)} $. We have considered two cases. First with greedy implementations of least squares with complexity or $ \ell_1 $ penalty, such upper bounds are obtained in \cite{Klusowski2016} with $ \gamma = 1/3 $ in the noise free case and $ \gamma = 1/4 $ in the sub-Gaussian noise case (which includes the Gaussian noise case). The rate with $ \gamma = 1/3 $ is also possible in the sub-Gaussian noise setting (as well as the noise free setting) via a least squares estimator over a discretization of the parameter space.

It is desirable likewise to have lower bounds on the minimax risk for this setting that show that is depends primarily on $ v^{\alpha}_0v^{2\alpha}_1/n $ to some power (within $ \log d $ factors). It is the purpose of this paper to obtain such lower bounds. Here with $ \gamma = 1/2 $. Thereby, this paper on lower bounds is to provide a companion to (refinement of) the working paper on upper bounds \cite{Klusowski2016}. Lower bounding minimax risk in non-parametric regression is primarily an information-theoretic problem. This was first observed by \cite{Ibragimov1980} and then \cite{Birge1983}, \cite{Birge1986} who adapted Fano's inequality in this setting. Furthermore, \cite{Barron1999} showed conditions such that the minimax risk $ \epsilon^2_n $ is characterized (to within a constant factor) by solving for the approximation error $ \epsilon^2 $ that matches the metric entropy relative to the sample size $ (\log N(\epsilon))/n $, where $ N(\epsilon) $ is the size of the largest $ \epsilon $-packing set. Accordingly, the core of our analysis is providing packing sets for $ \mathcal{F}_{v_0,v_1} $ for specific choices of $ \phi $.

\section{Results for sinusoidal nets}

We now state our main result. In this section, it is for the sinusoidal activation function $ \phi(z) = \sqrt{2}\sin(\pi z) $. We consider two regimes: when $ d $ is larger than $ v_0 $ and visa-versa. In each case, this entails putting a non-restrictive technical condition on either quantity. For $ d $ larger than $ v_0 $, this condition is
\begin{equation} \label{eq:cond1}
\tfrac{d}{v_0}+1 > (c_4\tfrac{v^2_1n}{v_0\log(1+d/v_0)})^{1/v_0},
\end{equation}
and when $ v_0 $ is larger than $ d $,
\begin{equation} \label{eq:cond2}
\tfrac{v_0}{d}+1 > (c_5\tfrac{v^2_1n}{d\log(1+v_0/d)})^{1/d},
\end{equation}
for some positive constants $ c_4, c_5 $.
Note that when $ d $ is large compared to $ \log n $, condition \prettyref{eq:cond2} holds. Indeed, the left side is at least $ 2 $ and the right side is $ e^{\tfrac{1}{d}\log(\tfrac{v^2_1n}{d\log(1+v_0/d)})} $, which is near $ 1 $. Likewise, \prettyref{eq:cond1} holds when $ v_0 $ is large compared to $ \log n $. \\

\begin{theorem} \label{thm:lower}
Consider the model $ Y = f(X) + \varepsilon $ for $ f \in \mathcal{F}_{v_0,v_1, \text{sine}} $, where $ \varepsilon \sim N(0, 1) $ and $ X \sim \text{Uniform}[-1,1]^d $.
If $ d $ is large enough so that \prettyref{eq:cond1} is satisfied, then
\begin{equation} \label{eq:lowerhighdim}
R_{n,d}(\mathcal{F}_{v_0,v_1, \text{sine}}) \geq c_6 (\tfrac{v_0v^2_1\log(1+d/v_0)}{n})^{1/2},
\end{equation}
for some universal constant $ c_6 > 0 $.
Furthermore, if $ v_0 $ is large enough so that \prettyref{eq:cond2} is satisfied, then
\begin{equation} \label{eq:lowerlowdim}
R_{n,d}(\mathcal{F}_{v_0,v_1, \text{sine}}) \geq c_7 (\tfrac{dv^2_1\log(1+v_0/d)}{n})^{1/2}.
\end{equation}
for some universal constant $ c_7 > 0 $.
\end{theorem}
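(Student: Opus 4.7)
The plan is to derive the lower bound via Fano's inequality applied to a finite packing of $\mathcal{F}_{v_0,v_1,\text{sine}}$ built from integer-frequency Fourier atoms. The starting point is the observation that for every $\theta \in \integers^d$, the atom $\psi_\theta(x) := \sqrt{2}\sin(\pi \theta \cdot x)$ equals $\phi(\theta\cdot x - 0)$ with $\phi(z)=\sqrt{2}\sin(\pi z)$, and therefore lies in $\mathcal{D}_{v_0}$ whenever $\|\theta\|_1\leq v_0$; moreover, distinct integer frequencies (up to sign) yield an orthonormal system under the uniform measure on $[-1,1]^d$. This embeds a large orthonormal subdictionary inside $\mathcal{D}_{v_0}$ and reduces the construction of a packing to a combinatorial coding problem.

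I would first select an index set $T\subset\integers^d$ whose log-cardinality matches the target entropy rate. When $d\geq v_0$, I take $T$ to be sign-reduced $\{-1,0,1\}^d$-vectors with exactly $v_0$ nonzero entries, giving $|T|\asymp \binom{d}{v_0}2^{v_0}$ and $\log|T|\asymp v_0\log(1+d/v_0)$; when $v_0\geq d$, I take all sign-reduced integer vectors with $|\theta_j|\leq\lfloor v_0/d\rfloor$, giving $|T|\asymp(1+v_0/d)^d$ and $\log|T|\asymp d\log(1+v_0/d)$. In both regimes each $\theta\in T$ automatically satisfies $\|\theta\|_1\leq v_0$. Next, a Gilbert--Varshamov argument for constant weight codes produces $N$ subsets $A_1,\ldots,A_N\subset T$ of size $m$ with pairwise symmetric differences at least $m/2$ and $\log N\gtrsim m\log(|T|/m)$. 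Setting $f_{A_i}(x)=(v_1/m)\sum_{\theta\in A_i}\psi_\theta(x)$ yields hypotheses in $\mathcal{F}_{v_0,v_1,\text{sine}}$ (the outer $\ell_1$-norm is exactly $v_1$), and orthonormality gives $v_1^2/(2m)\leq\|f_{A_i}-f_{A_j}\|^2\leq 2v_1^2/m$ for $i\neq j$.

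Under Gaussian observation noise the pairwise KL divergence of the $n$-sample product measures equals $(n/2)\|f_{A_i}-f_{A_j}\|^2\leq nv_1^2/m$, so Fano's inequality produces a minimax lower bound of order $v_1^2/m$ whenever $nv_1^2/m \lesssim \log N\asymp m\log(|T|/m)$. Balancing these two quantities gives $m\asymp\sqrt{nv_1^2/\log(|T|/m)}$ and a lower bound of order $\sqrt{v_1^2\log(|T|/m)/n}$. The hypotheses \prettyref{eq:cond1} and \prettyref{eq:cond2} are precisely what guarantee $m^2\lesssim|T|$, so that $\log(|T|/m)\asymp\log|T|$: raising both sides of \prettyref{eq:cond1} to the $v_0$-th power converts it into the bound $nv_1^2/(v_0\log(1+d/v_0))\lesssim(1+d/v_0)^{v_0}$, and an analogous reading applies to \prettyref{eq:cond2}. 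Substituting the two values of $\log|T|$ then recovers \prettyref{eq:lowerhighdim} and \prettyref{eq:lowerlowdim}. The main technical obstacle is the coding step: establishing $\log N\gtrsim m\log(|T|/m)$ requires a sharp volume bound on balls in the symmetric-difference metric on $\binom{T}{m}$, which is only clean in the sparse regime $m\ll|T|$ that the hypotheses secure; everything else reduces to standard Fano bookkeeping.
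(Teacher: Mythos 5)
Your proposal is correct and follows essentially the same route as the paper: an orthonormal family of integer-frequency sinusoidal atoms inside $\mathcal{D}_{v_0}$, a constant-weight-code (Gilbert--Varshamov type) packing of $m$-subsets with $\log N \gtrsim m\log(|T|/m)$, Fano's inequality with the KL divergence computed via orthonormality, and a balance of $v_1^2/m$ against $m\log|T|/n$, with conditions \prettyref{eq:cond1}--\prettyref{eq:cond2} serving exactly to keep $m^2\lesssim|T|$. The only differences are cosmetic: the paper invokes a specific lemma of Gao et al.\ in place of your Gilbert--Varshamov step, bounds the mutual information via the KL to the zero function rather than pairwise divergences, and uses the full set $\{\theta\in\integers^d:\|\theta\|_1\leq v_0\}$ rather than your trimmed index sets (your explicit sign-reduction is a nice touch the paper glosses over).
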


Before we prove \prettyref{thm:lower}, we first state a lemma which is contained in the proof of Theorem 1 (pp. 46-47) in \cite{Gao2013}. \\

\begin{lemma} \label{lmm:subsets}
For integers $ M, L $ with $ M \geq 10 $ and $ 1 \leq L \leq M/10 $, define the set
\begin{equation*}
\mathcal{S} = \{ \omega\in \{0,1\}^M: \|\omega\|_1 = L \}.
\end{equation*}
There exists a subset $ \mathcal{A} \subset \mathcal{S} $ with cardinality at least $ \sqrt{\tbinom{M}{L}} $ such that the Hamming distance between any pairs of $ \mathcal{A} $ is at least $ L/5 $.
\end{lemma}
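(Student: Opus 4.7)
The plan is to prove the lemma via a Gilbert--Varshamov greedy packing in the Hamming metric on $\mathcal{S}$. Concretely, I would build $\mathcal{A}$ iteratively: start by placing an arbitrary element of $\mathcal{S}$ into $\mathcal{A}$, and at each subsequent step adjoin any $\omega \in \mathcal{S}$ whose Hamming distance to every element already in $\mathcal{A}$ is at least $L/5$, continuing until no such $\omega$ remains. At termination every element of $\mathcal{S}$ lies within Hamming distance strictly less than $L/5$ of some element of $\mathcal{A}$. Since the Hamming ball of radius $< L/5$ inside $\mathcal{S}$ has a cardinality $B$ that does not depend on the center (by permutation symmetry of the Hamming metric on constant-weight vectors), this forces $|\mathcal{S}| \le |\mathcal{A}|\, B$ and hence $|\mathcal{A}| \ge \binom{M}{L}/B$. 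The task therefore reduces to showing $B \le \sqrt{\binom{M}{L}}$.

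Next I would write $B$ explicitly. The Hamming distance between two weight-$L$ vectors is always even, and two such vectors at distance exactly $2k$ arise by swapping $k$ of the $L$ one-coordinates of a reference vector with $k$ of its $M-L$ zero-coordinates. Consequently
\begin{equation*}
B \;=\; \sum_{k=0}^{K} \binom{L}{k}\binom{M-L}{k}, \qquad K \;=\; \Bigl\lfloor \tfrac{L/5 - 1}{2}\Bigr\rfloor \;\le\; \tfrac{L}{10}.
\end{equation*}

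The main obstacle is then the combinatorial inequality $B \le \sqrt{\binom{M}{L}}$, uniformly across the admissible range $M \ge 10$ and $1 \le L \le M/10$. Standard Hoeffding-type tail bounds for the associated hypergeometric probability yield exponents that are only $O(L)$, whereas $\tfrac{1}{2}\log\binom{M}{L}$ can be as large as $\tfrac{L}{2}\log(M/L)$, so such bounds degrade as $M/L \to \infty$ and are not sharp enough. I would instead attack the sum directly: under $L \le M/10$ and $k < K \le L/10$ the ratio of consecutive summands $(L-k)(M-L-k)/(k+1)^2$ is bounded below by a universal constant strictly greater than $1$, so $B$ is dominated by a constant multiple of its last term $\binom{L}{K}\binom{M-L}{K}$. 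Applying the standard estimate $\binom{n}{k} \le (en/k)^k$ to that last term, and the matching elementary lower bound $\binom{M}{L} \ge (M/L)^L$ obtained from the product representation of $\binom{M}{L}$, the inequality $B \le \sqrt{\binom{M}{L}}$ reduces to a numerical comparison that, thanks to $K \le L/10$ and $M - L \ge 9L$, holds with room to spare. Combining this with the greedy lower bound yields $|\mathcal{A}| \ge \binom{M}{L}/B \ge \sqrt{\binom{M}{L}}$, as required.
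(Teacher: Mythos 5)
Your argument is correct in substance, but note that the paper does not actually prove \prettyref{lmm:subsets} itself: it imports the statement from the proof of Theorem 1 in the cited reference [Gao2013], where the argument is of exactly the type you give --- a maximal (Gilbert--Varshamov) packing of the constant-weight slice combined with a bound showing the Hamming ball of radius $L/5$ in $\mathcal{S}$ has size at most $\sqrt{\tbinom{M}{L}}$. So you have essentially reconstructed the standard proof rather than found a different route. Your execution checks out: the maximality argument gives $|\mathcal{A}| \geq \tbinom{M}{L}/B$; the ratio of consecutive terms $(L-k)(M-L-k)/(k+1)^2$ is indeed at least $(9L/10)(8.9L)/(L/10)^2 = 801$ for $k+1 \leq L/10$ under $M \geq 10L$, so $B$ is at most $\tfrac{801}{800}\binom{L}{K}\binom{M-L}{K}$; and the final comparison reduces (after taking logarithms and using $\binom{n}{k}\leq (en/k)^k$, $\binom{M}{L}\geq (M/L)^L$, and the monotonicity of $K\mapsto K\log(e^2LM/K^2)$ up to $K=L/10$) to $4\log(M/L) \geq \log(100e^2)$, which holds since $M/L \geq 10$. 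One small slip: your cutoff $K = \lfloor (L/5-1)/2\rfloor$ undercounts the ball for some values of $L$ (e.g.\ $L=12$, where vectors at distance $2$ satisfy $2 < L/5 = 2.4$ but your formula gives $K=0$); the correct cutoff is the largest integer $k$ with $2k < L/5$, namely $K = \lceil L/10\rceil - 1$. Since this still satisfies $K \leq L/10$, which is the only property you use downstream, the fix is cosmetic and the rest of the argument is unaffected.
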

Note that the elements of the set $ \mathcal{A} $ in \prettyref{lmm:subsets} can be interpreted as binary codes of length $ M $, constant Hamming weight $ L $, and minimum Hamming distance $ L/5 $. These are called constant weight codes and the cardinality of the largest such codebook, denoted by $ A(M, L/5, L) $, is also given a combinatorial lower bound in \cite{Sloane1980}. The conclusion of \prettyref{lmm:subsets} is $ A(M, L/5, L) \geq \sqrt{\tbinom{M}{L}} $.

\begin{proof}[Proof of \prettyref{thm:lower}] For simplicity, we henceforth write $ \mathcal{F}_{v_0, v_1} $ instead of $ \mathcal{F}_{v_0, v_1, \text{sine}} $. Define the collection $ \Lambda = \{ \theta \in \mathbb{Z}^d : \|\theta\|_1 \leq v_0 \} $. Without loss of generality, assume that $ v_0 $ is an integer so that $ M := \#\Lambda = \tbinom{2d+v_0}{2d} $. Consider sinusoidal ridge functions $ \sqrt{2}\sin(\pi\theta\cdot x) $ with $ \theta $ in $ \Lambda $. Note that these functions (for $ \theta \neq 0 $) are orthonormal with respect to the uniform probability measure $ P $ on $ B = [-1, 1]^d $. This fact is easily established using an instance of Euler's formula $ \sin(\pi\theta\cdot x) = \frac{1}{2 i}(\prod_{k=1}^de^{i\pi\theta_k x_k } - \prod_{k=1}^de^{-i\pi\theta_k x_k }) $. 

For an enumeration $ \theta_1, \dots, \theta_M $ of $ \Lambda $, define a subclass of $ \mathcal{F}_{v_0,v_1} $ by
\begin{equation*}
\mathcal{F}_0 = \{ f_{\omega} = \tfrac{v_1}{L}\textstyle\sum_{k=1}^M\omega_k\sqrt{2}\sin(\pi\theta_k\cdot x) : \omega \in \mathcal{A} \},
\end{equation*}
where $ \mathcal{A} $ is the set in \prettyref{lmm:subsets}.
Any distinct pairs $ f_{\omega},f_{\omega^{\prime}} $ in $ \mathcal{F}_0 $ have $ L_2(P) $ squared distance at least $ \|f_{\omega}-f_{\omega^{\prime}}\|^2 \geq v^2_1\|\omega-\omega^{\prime}\|^2_2/L^2 \geq v^2_1/(5L) $. A separation of $ \epsilon^2 $ determines $ L = (v_1/(\sqrt{5}\epsilon))^2 $. Depending on the size of $ d $ relative to $ v_0 $, there are two different behaviors of $ M $. For $ d > v_0 $, we use $ M \geq \tbinom{d+v_0}{v_0} \geq (1+d/v_0)^{v_0} $ and for $ d < v_0 $, $ M \geq \tbinom{d+v_0}{d} \geq (1+v_0/d)^d $.

By \prettyref{lmm:subsets}, a lower bound on the cardinality of $ \mathcal{A} $ is $ \sqrt{\tbinom{M}{L}} $ with logarithm lower bounded by $  (L/2)\log(L/M) $. To obtain a cleaner form that highlights the dependence on $ L $, we assume that $ L \leq \sqrt{M} $, giving $ \log(\# \mathcal{A}) \geq (L/4)\log M $. Since $ L $ is proportional to $ (v_1/\epsilon)^2 $, this condition puts a lower bound on $ \epsilon $ of order $ v_1M^{-1/4} $.
If $ \epsilon > v_1/(1+d/v_0)^{v_0/4} $, it follows that a lower bound on the logarithm of the packing number is of order $ \log N_{d > v_0}(\epsilon) = v_0(v_1/\epsilon)^2\log(1+d/v_0) $. If $ \epsilon > v_1/(1+v_0/d)^{d/4} $, a lower bound on the logarithm of the packing number is of order $ \log N_{v_0 > d}(\epsilon) = d(v_1/\epsilon)^2\log(1+v_0/d) $. Thus we have found an $ \epsilon $-packing set of these cardinalities. As such, they are lower bounds on the metric entropy of $ \mathcal{F}_{v_0,v_1} $.

Next we use the information-theoretic lower bound techniques in \cite{Barron1999} or \cite{Tsybakov2008}. Let $ p_{\omega}(x,y) = p(x)\psi(y-f_{\omega}(x)) $, where $ p $ is the uniform density on $ [-1,1]^d $ and $ \psi $ is the $ N(0, 1) $ density. Then
\begin{equation*}
R_{n,d}(\mathcal{F}_{v_0, v_1}) \geq (\epsilon^2/4)\inf_{\hat{f}}\sup_{f\in\mathcal{F}_0}\mathbb{P}(\|f-\hat{f}\|^2 \geq \epsilon^2),
\end{equation*}
where the estimators $ \hat{f} $ are now restricted to $ \mathcal{F}_0 $. The supremum is at least the uniformly weighted average over $ f \in\mathcal{F}_0 $. Thus a lower bound on the minimax risk is a constant times $ \epsilon^2 $ provided the minimax probability is bounded away from zero, as it is for sufficient size packing sets.
Indeed, by Fano's inequality as in \cite{Barron1999}, this minimax probability is at least
\begin{equation*}
1- \tfrac{\alpha\log(\# \mathcal{F}_0)+\log2}{\log(\# \mathcal{F}_0)},
\end{equation*}
for $ \alpha $ in $ (0,1) $, or by an inequality of Pinsker, as in Theorem 2.5 in \cite{Tsybakov2008}, it is at least
\begin{equation*}
\tfrac{\sqrt{\#\mathcal{F}_0}}{1+\sqrt{\#\mathcal{F}_0}}(1-2\alpha-\sqrt{\tfrac{2\alpha}{\log(\#\mathcal{F}_0)}}),
\end{equation*}
for some $ \alpha $ in $ (0, 1/8) $. These inequalities hold provided we have the following
\begin{equation*}
\tfrac{1}{\#\mathcal{F}_0}\textstyle\sum_{\omega\in\mathcal{A}}D(p^n_{\omega}||q) \leq \alpha\log(\#\mathcal{F}_0),
\end{equation*}
bounding the mutual information between $ \omega $ and the data $ \{(X_i, Y_i)\}_{i=1}^n $, where $ q $ is any fixed joint density for $ \{(X_i, Y_i)\}_{i=1}^n $. When suitable metric entropy upper bounds on the log-cardinality of covers $ \mathcal{F}_{\omega^{\prime}\in\mathcal{A}^{\prime}} := \{ f: \|f-f_{\omega^{\prime}}\| < \epsilon^{\prime} \} $ of $ \mathcal{F}_0 $ are available, one may use $ q $ as a uniform mixture of $ p^n_{\omega^{\prime}} $ for $ \omega^{\prime} $ in $ \mathcal{A}^{\prime} $ as in \cite{Barron1999}, as long as $ \epsilon $ and $ \epsilon^{\prime} $  are arranged to be of the same order. In the special case that $ \mathcal{F}_0 $ has small radius already of order $ \epsilon $, one has the simplicity of taking $ \mathcal{A}^{\prime} $ to be the singleton set consisting of $ \omega^{\prime} = 0 $.  In the present case, since each element in $ \mathcal{F}_0 $ has squared norm $ v^2_1/L = 5\epsilon^2 $ and pairs of elements in $ \mathcal{F}_0 $ have squared separation $ \epsilon^2 $, these function are near $ f_0 \equiv 0 $ and hence we choose $ q = p^n_0 $.
A standard calculation yields
\begin{equation*}
D(p^n_{\omega}||p^n_0) \leq \tfrac{n}{2}\|f_{\omega}\|^2 \leq \tfrac{nv^2_1}{2L} = (5/2)n\epsilon^2.
\end{equation*}

We choose $ \epsilon_n $ such that this $ (5/2)n\epsilon^2_n \leq \alpha\log(\# \mathcal{F}_0) $. Thus, in accordance with \cite{Barron1999}, if $ N_{d > v_0}(\epsilon_n) $ and $ N_{v_0 > d}(\epsilon_n) $ are available lower bounds on $ \#\mathcal{F}_0 $, to within a constant factor, a minimax lower bound $ \epsilon^2_n $ on the $ L_2(P) $ squared error risk is determined by matching
\begin{equation*}
\epsilon^2_n = \tfrac{\log N_{d > v_0}(\epsilon_n)}{n},
\end{equation*}
and
\begin{equation*}
\epsilon^2_n = \tfrac{\log N_{v_0 > d}(\epsilon_n)}{n}.
\end{equation*}
Solving in either case, we find that
\begin{equation*}
\epsilon^2_n = (\tfrac{v_0v^2_1\log(1+d/v_0)}{n})^{1/2},
\end{equation*}
and
\begin{equation*}
\epsilon^2_n = (\tfrac{dv^2_1\log(1+v_0/d)}{n})^{1/2}.
\end{equation*}
These quantities are valid lower bounds on $ R_{n,d}(\mathcal{F}_{v_0,v_1}) $ to within constant factors, provided $ N_{d>v_0}(\epsilon_n) $ and $ N_{v_0 > d}(\epsilon_n) $ are valid lower bounds on the $ \epsilon_n $-packing number of $ \mathcal{F}_{v_0,v_1} $. Checking that $ \epsilon_n > v_1/(1+d/v_0)^{v_0/2} $ and $ \epsilon_n > v_1/(1+v_0/d)^{d/2} $ yields conditions \prettyref{eq:cond1} and \prettyref{eq:cond2}, respectively.
\end{proof}

\section{Implications for neural nets}

The variation of a function $ f $ with respect to a dictionary $ \mathcal{D} $ \cite{Barron1992}, also called the atomic norm of $ f $ with respect to $ \mathcal{D} $, denoted $ V_f(\mathcal{D}) $, is defined as the infimum of all $ v $ such that $ f $ is in $ \ell_1(v,\mathcal{D}) $.  Here the closure in the definition of $ \ell_1(v,\mathcal{D}) $ is taken in $ L_{\infty} $.

Define $ \phi(z) = \sqrt{2}\sin(\pi z) $. On the interval $ [-v_0, v_0] $, it can be shown that $ \phi(z) $ has variation $ V_{\phi} = 2\sqrt{2}\pi v_0 $ with resepct to the dictionary of unit step activation functions $ \pm\text{step}(z^{\prime}-t^{\prime}) $, where $ \text{step}(z) = \mathbb{I}\{z > 0\} $, or equivalently, variation $ \sqrt{2}\pi v_0 $ with respect to the dictionary of signum activation functions with shifts $ \pm\text{sgn}(z^{\prime}-t^{\prime}) $, where $ \text{sgn}(z) = 2\text{step}(z)-1 $. This can be seen directly from the identity
\begin{equation*}
\sin z = \tfrac{v}{2}\int_{0}^1 \cos(v t)[ \text{sgn}(z/v-t) -\text{sgn}(-z/v-t) ]dt,
\end{equation*}
for $ |z| \leq v $. Evaluation of $ \int_{0}^1|\cos(v t)|dt $ gives the exact value of $ \phi $ with respect to $ \text{sgn} $ as $ \sqrt{2}\pi v_0 $ for integer $ v = v_0 $. Accordingly, $ \mathcal{F}_{v_0, v_1, \phi} $ is contained in $ \mathcal{F}_{1, \sqrt{2}\pi v_0v_1, \text{sgn}} $.

Likewise, for the clipped linear function $ \text{clip}(z) = \text{sgn}(z)\min\{ 1, |z| \}  $ a similar identity holds:
\begin{align*}
\sin z & = z + \tfrac{v^2}{2}\int_{0}^1 \sin(v t)[ \text{clip}(-2z/v-2t-1) - \\ & \qquad \text{clip}(2z/v-2t-1) ]dt,
\end{align*}
for $ |z| \leq v $. The above form arises from integrating
\begin{align*}
\cos w & = \cos v - \tfrac{v}{2}\int_{0}^1 \sin(v t)[ \text{sgn}(-w/v-t) + \\ & \qquad \text{sgn}(w/v-t) ]dt,
\end{align*}
from $ w = 0 $ to $ w = z $. And likewise, evaluation of $ \int_{0}^1|\sin(v t)|dt $ gives the exact variation of $ \phi $ with respect to the dictionary of clip activation functions $ \pm\text{clip}(z^{\prime}-t^{\prime}) $ as $ V_{\phi} = \sqrt{2}\pi(v^2_0+1) $ for integer $ v = v_0 $. Accordingly, $ \mathcal{F}_{v_0, v_1, \phi} $ is contained in $ \mathcal{F}_{2, \sqrt{2}\pi(v^2_0+1)v_1, \text{clip}} $ and hence we have the following corollary. \\

\begin{corollary} \label{cor:lowersigmoid}
Using the same setup and conditions \prettyref{eq:cond1} and \prettyref{eq:cond2} as in \prettyref{thm:lower}, the minimax risk for the sigmoid classes $ \mathcal{F}_{1, \sqrt{2}\pi v_0v_1, \text{sgn}} $ and $ \mathcal{F}_{2, \sqrt{2}\pi(v^2_0+1)v_1, \text{clip}} $ have the same lower bounds \prettyref{eq:lowerhighdim} and \prettyref{eq:lowerlowdim} as for $ \mathcal{F}_{v_0,v_1,\text{sine}} $.
\end{corollary}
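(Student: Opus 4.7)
My plan is to obtain the corollary as a straightforward consequence of two ingredients already assembled just above its statement: the class inclusions derived from the trigonometric integral identities, and the elementary monotonicity of the minimax risk under set inclusion. All of the substantive work — writing $\sqrt{2}\sin(\pi z)$ as a signed integral of $\mathrm{sgn}$ (resp.\ $\mathrm{clip}$) atoms, and evaluating $\int_0^1|\cos(vt)|\,dt$ and $\int_0^1|\sin(vt)|\,dt$ for integer $v=v_0$ to obtain the variation constants $\sqrt{2}\pi v_0$ and $\sqrt{2}\pi(v_0^2+1)$ — has already been done in the preceding paragraph.

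First, I would use those identities to conclude that each sinusoidal ridge atom $\sqrt{2}\sin(\pi(c_0\cdot x - b))$ with $\|c_0\|_1 \leq v_0$ can be rewritten as a continuous $\ell_1$ combination of $\mathrm{sgn}$ atoms $\mathrm{sgn}(\theta'\cdot x - t')$ with $\|\theta'\|_1 \leq 1$, using outer weight of total variation $\sqrt{2}\pi v_0$, and likewise as an $\ell_1$ combination of $\mathrm{clip}$ atoms with inner bound $2$ and outer variation $\sqrt{2}\pi(v_0^2+1)$. Combining these ridge-wise with the outer $\ell_1$ constraint $\|c_1\|_1 \leq v_1$ on the sine representation, and using that the $L_\infty$-closure defining $\ell_1(\cdot,\mathcal{D})$ absorbs continuous integrals (approximated by Riemann sums against the known bounded weight functions $\cos(v t)$ or $\sin(v t)$), I obtain the inclusions
\[
\mathcal{F}_{v_0,v_1,\mathrm{sine}} \subseteq \mathcal{F}_{1,\sqrt{2}\pi v_0 v_1,\mathrm{sgn}}, \qquad \mathcal{F}_{v_0,v_1,\mathrm{sine}} \subseteq \mathcal{F}_{2,\sqrt{2}\pi(v_0^2+1)v_1,\mathrm{clip}}.
\]

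Second, I would invoke monotonicity of the minimax risk: by definition \prettyref{eq:minimax}, whenever $\mathcal{F}_1 \subseteq \mathcal{F}_2$, every estimator $\hat{f}$ satisfies $\sup_{f\in\mathcal{F}_1}\mathbb{E}\|f-\hat{f}\|^2 \leq \sup_{f\in\mathcal{F}_2}\mathbb{E}\|f-\hat{f}\|^2$, so $R_{n,d}(\mathcal{F}_1) \leq R_{n,d}(\mathcal{F}_2)$. Applied to the two inclusions above, this yields $R_{n,d}(\mathcal{F}_{1,\sqrt{2}\pi v_0 v_1,\mathrm{sgn}}) \geq R_{n,d}(\mathcal{F}_{v_0,v_1,\mathrm{sine}})$ and likewise for the $\mathrm{clip}$ class, so the lower bounds \prettyref{eq:lowerhighdim} and \prettyref{eq:lowerlowdim} from \prettyref{thm:lower} transfer verbatim. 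The conditions \prettyref{eq:cond1} and \prettyref{eq:cond2} involve only $d$ and $v_0$ and are indifferent to the choice of activation, so they carry over unchanged.

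The only mild subtlety is in the first step: one must check that the affine argument $c_0\cdot x - b$, as $x$ ranges over $B=[-1,1]^d$ and the shift $b$ is free, lies in the interval on which the identities are valid (which is automatic by taking $v=v_0$ and absorbing the shift into $t$), and that the rescaling reducing the inner $\ell_1$-norm from $v_0$ to $1$ (or $2$) does not inflate the outer variation beyond what has been announced. Both checks are essentially performed in the paragraph above, so the remainder of the proof is bookkeeping; no new probabilistic or information-theoretic argument is needed.
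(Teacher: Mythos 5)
Your proposal is correct and follows essentially the same route as the paper: the paper itself derives the inclusions $\mathcal{F}_{v_0,v_1,\mathrm{sine}} \subseteq \mathcal{F}_{1,\sqrt{2}\pi v_0 v_1,\mathrm{sgn}}$ and $\mathcal{F}_{v_0,v_1,\mathrm{sine}} \subseteq \mathcal{F}_{2,\sqrt{2}\pi(v_0^2+1)v_1,\mathrm{clip}}$ from the stated integral identities and variation computations, and the corollary is then immediate from monotonicity of the minimax risk (equivalently, the packing set $\mathcal{F}_0$ from Theorem~\ref{thm:lower} sits inside the larger sigmoid classes). Your additional remarks on closure under continuous $\ell_1$ combinations and on the validity interval of the identities are sensible bookkeeping consistent with the paper's argument.
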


\section{Implications for polynomial nets}

It is also possible to give minimax lower bounds for the function classes $ \mathcal{F}_{v_0,v_1,\phi_{\ell}} $ with activation function $ \phi_{\ell} $ equal to the standardized Hermite polynomial $ H_{\ell}/\sqrt{\ell!} $, where $ H_{\ell}(z) = (-1)^{\ell}e^{\tfrac{z^2}{2}}\tfrac{d^{\ell}}{dz^{\ell}}e^{-\tfrac{z^2}{2}}  $. 
As with \prettyref{thm:lower}, this requires a lower bound on $ d $:
\begin{equation} \label{eq:cond3}
\tfrac{d}{v^2_0} > (c_8\tfrac{v^2_1n}{v^2_0\log(d/v^2_0)})^{2/v^2_0}.
\end{equation}
for some constant $ c_8 > 0 $. Moreover, we also need a growth condition on the order of the polynomial $ \ell $:
\begin{equation} \label{eq:cond4}
\ell > c_9\log(\tfrac{v^2_1n}{v^2_0\log(d/v^2_0)}),
\end{equation}
for some constant $ c_9 > 0 $. In light of \prettyref{eq:cond3}, condition \prettyref{eq:cond4} is also satisfied if $ \ell $ is at least a constant multiple of $ v^2_0\log(d/v^2_0) $. \\

\begin{theorem} \label{thm:lower_poly}
Consider the model $ Y = f(X) + \varepsilon $ for $ f \in \mathcal{F}_{v_0,v_1, \phi_{\ell}} $, where $ \varepsilon \sim N(0, 1) $ and $ X \sim N(0, I_d) $.
If $ d $ and $ \ell $ are large enough so that conditions \prettyref{eq:cond3} and \prettyref{eq:cond4} are satisfied, respectively, then
\begin{equation} \label{eq:lowerhighdim_poly}
R_{n,d}(\mathcal{F}_{v_0,v_1, \phi_{\ell}}) \geq c_{10}(\tfrac{v^2_0v^2_1\log(d/v^2_0)}{n})^{1/2},
\end{equation}
for some universal constant $ c_{10} > 0 $.
\end{theorem}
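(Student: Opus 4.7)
My plan is to mimic the proof of \prettyref{thm:lower} but replace the exactly orthonormal sinusoidal ridge functions with Hermite ridge functions that are only \emph{approximately} orthonormal. The key identity is Mehler's formula: when $X \sim N(0, I_d)$ and $\theta, \theta' \in \mathbb{R}^d$ have unit $\ell_2$ norm, the pair $(\theta \cdot X, \theta' \cdot X)$ is jointly standard normal with correlation $\theta \cdot \theta'$, so $\mathbb{E}[\phi_\ell(\theta \cdot X)\phi_\ell(\theta' \cdot X)] = (\theta \cdot \theta')^\ell$. Thus if we can select $\ell_1$-constrained unit $\ell_2$ vectors whose pairwise inner products are bounded strictly below $1$, raising to the $\ell$-th power drives the off-diagonal correlations to be exponentially small in $\ell$.

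I would construct the ridge-direction dictionary via sparse vectors. Set $s = \lfloor v^2_0 \rfloor$ and take $\theta_k = e_{S_k}/\sqrt{s}$ for $s$-subsets $S_k \subset [d]$; each such $\theta_k$ has $\|\theta_k\|_2 = 1$ and $\|\theta_k\|_1 = \sqrt{s} \leq v_0$. A first application of \prettyref{lmm:subsets} to weight-$s$ binary strings of length $d$ (condition \prettyref{eq:cond3} will force $s \leq d/10$) yields a family of $M \geq \sqrt{\binom{d}{s}}$ sets with pairwise Hamming distance at least $s/5$, hence $|S_k \cap S_{k'}| \leq 9s/10$ and $\theta_k \cdot \theta_{k'} \leq 9/10$ for $k \neq k'$, while $\log M \gtrsim v^2_0 \log(d/v^2_0)$. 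A second application to length-$M$ weight-$L$ strings produces a codebook $\mathcal{A}$ with $|\mathcal{A}| \geq \sqrt{\binom{M}{L}}$ and pairwise squared $\ell_2$ distance $\geq L/5$. Define $\mathcal{F}_0 = \{ f_\omega = (v_1/L)\sum_{k=1}^M \omega_k \phi_\ell(\theta_k \cdot x) : \omega \in \mathcal{A} \}$.

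Expanding $\|f_\omega - f_{\omega'}\|^2$ via the Mehler identity, the diagonal contribution is $(v_1/L)^2 \|\omega-\omega'\|_2^2 \geq v^2_1/(5L)$ while the off-diagonal sum is bounded in absolute value by $(v_1/L)^2 \cdot 4 L^2 (9/10)^\ell$. Choosing $\ell$ to be at least a constant multiple of $\log L$ (which, at the optimal $L$, becomes exactly condition \prettyref{eq:cond4}) gives $\|f_\omega - f_{\omega'}\|^2 \geq v^2_1/(10L) =: \epsilon^2$ and similarly $\|f_\omega\|^2 \leq 2v^2_1/L$. From here, the Fano/Pinsker machinery of \prettyref{thm:lower} applies verbatim with $q = p^n_0$: the KL bound $D(p^n_\omega \| p^n_0) \leq (n/2)\|f_\omega\|^2 \lesssim nv^2_1/L$, combined with $\log|\mathcal{A}| \geq (L/4)\log M \gtrsim L v^2_0 \log(d/v^2_0)$ (valid as long as $L \leq \sqrt{M}$, which is exactly the content of \prettyref{eq:cond3}), yields on matching $\epsilon^2 \asymp \log|\mathcal{A}|/n$ the announced rate $\epsilon^2_n \asymp (v^2_0 v^2_1 \log(d/v^2_0)/n)^{1/2}$. \emph{The main obstacle} is controlling the near-orthonormality: one needs simultaneously the combinatorial construction of sparse unit $\ell_2$ vectors with small pairwise inner products \emph{and} a polynomial degree $\ell$ large enough to suppress the off-diagonal $(\theta_k \cdot \theta_{k'})^\ell$ contributions — the latter is precisely the origin of the growth condition \prettyref{eq:cond4}.
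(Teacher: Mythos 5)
Your proposal is correct and follows essentially the same route as the paper: the identical sparse construction $\theta_k = a/v_0$ with $a$ of Hamming weight $v_0^2$ via \prettyref{lmm:subsets}, the same Hermite correlation identity $\mathbb{E}[\phi_\ell(\theta\cdot X)\phi_\ell(\theta'\cdot X)] = (\theta\cdot\theta')^\ell$ to control off-diagonal terms by $(9/10)^\ell$, and the same Fano matching of $\epsilon_n^2$ with $\log N(\epsilon_n)/n$. The roles you assign to conditions \prettyref{eq:cond3} and \prettyref{eq:cond4} also agree with the paper's.
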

\begin{proof}[Proof of \prettyref{thm:lower_poly}]
By \prettyref{lmm:subsets}, if $ d \geq 10 $ and $ 1 \leq d^{\prime} \leq d/10 $, there exists a subset $ \mathcal{C} $ of $ \{0, 1\}^d $ with cardinality at least $ M := \sqrt{\tbinom{d}{d^{\prime}}} $ such that each element has Hamming weight $ d^{\prime} $ and pairs of elements have minumum Hamming distance $ d^{\prime}/5 $. Thus, if $ a $ and $ a^{\prime} $ belong to this codebook, $ |a \cdot a^{\prime}| \leq (9/10)d^{\prime} $. Choose $ d^{\prime} = v^2_0 $ (assuming that $ v^2_0 $ is an integer less than $ d $), and form the collection $ \mathcal{B} = \{ \theta = a/v_0: a \in \mathcal{C} \} $. Note that each member of $ \mathcal{B} $ has unit $ \ell_2 $ norm and $ \ell_1 $ norm $ v_0 $. Moreover, the Euclidean inner product between each pair has magnitude bounded by $ 9/10 $. Next, we use the fact that if $ X \sim N(0, I_d) $ and $ \theta, \theta^{\prime} $ have unit $ \ell_2 $ norm, then $ \mathbb{E}[\phi_{\ell}(\theta\cdot X)\phi_{\ell}(\theta^{\prime}\cdot X)] = (\theta\cdot \theta^{\prime})^{\ell} $.
For an enumeration $ \theta_1, \dots, \theta_M $ of $ \mathcal{B} $, define a subclass of $ \mathcal{F}_{v_0,v_1, H_{\ell}} $ by
\begin{equation*}
\mathcal{F}_0 = \{ f_{\omega} = \tfrac{v_1}{L}\textstyle\sum_{k=1}^M\omega_k\phi_{\ell}(\theta_k\cdot x) : \omega \in \mathcal{A} \},
\end{equation*}
where $ \mathcal{A} $ is the set from \prettyref{lmm:subsets}.
Moreover, since each $ \theta_k $ has unit norm, $ \|\omega-\omega^{\prime} \|_1 \geq L/5 $, and $ \|\omega-\omega^{\prime}\|^2_1 \leq 2L\|\omega-\omega^{\prime}\|_1 $,
\begin{align*}
\|f_{\omega}-f_{\omega^{\prime}}\|^2 
& = \tfrac{v^2_1}{L^2}[\|\omega-\omega^{\prime}\|_1 + \\ & \qquad \textstyle\sum_{i\neq j}(\omega_i-\omega^{\prime}_i)(\omega_j-\omega^{\prime}_j)(\theta_i \cdot \theta_j)^{\ell}] \\
& \geq \tfrac{v^2_1}{L^2}[\|\omega-\omega^{\prime}\|_1 -\|\omega-\omega^{\prime}\|^2_1(9/10)^{\ell}] \\
& \geq \tfrac{v^2_1}{L^2}\|\omega-\omega^{\prime}\|_1(1-2L(9/10)^{\ell}) \\
& \geq \tfrac{v^2_1}{L}(1-2L(9/10)^{\ell}) \\
& \geq \tfrac{v^2_1}{10L},
\end{align*}
provided $ \ell > \tfrac{\log(4L)}{\log(10/9)} $. A separation of $ \epsilon^2 $ determines $ L = (v_1/(\sqrt{10}\epsilon))^2 $. If $ L \leq \sqrt{M} $, or equivalently, $ \epsilon \geq v_1M^{-1/4} $, then $ \log(\#\mathcal{F}_0) $ is at least a constant multiple of $ \log N_{d>v_0}(\epsilon) = (v_0v_1/\epsilon)^2\log(d/v^2_0) $. As before in \prettyref{thm:lower}, a minimax lower bound $ \epsilon^2_n $ on the $ L_2(P) $ squared error risk is determined by matching
\begin{equation*}
\epsilon^2_n = \tfrac{\log N_{d > v_0}(\epsilon_n)}{n},
\end{equation*}
which yields
\begin{equation*}
\epsilon^2_n = (\tfrac{v^2_0v^2_1\log(d/v^2_0)}{n})^{1/2}.
\end{equation*}
If conditions \prettyref{eq:cond3} and \prettyref{eq:cond4} are satisfied, $ N_{d>v_0}(\epsilon_n) $ is a valid lower bound on the $ \epsilon_n $-packing number of $ \mathcal{F}_{v_0, v_1, \phi_{\ell}} $.
\end{proof}

\textbf{Remark}. It is possible to obtain similar lower bounds with $ H_{\ell}(z) $ replaced by a clipped version, in which it is extended at constant height for $ |z| > \zeta_{\ell,\delta} $, where $ \mathbb{E}[\phi^2_{\ell}(Z)\mathbb{I}\{|Z|>\zeta_{\ell,\delta}\}] \leq \delta $ and $ Z \sim N(0, 1) $. Then corollary conclusions follow also for sigmoid classes using the variation of $ \phi_{\ell}(z) $ on $ \{z: |z|\leq\zeta_{\ell,\delta} \} $. Thereby, we obtain lower bounds for sigmoid nets for Gaussian design as well as for the uniform design of \prettyref{cor:lowersigmoid}.

\section{Discussion}

Our risk lower bound of the form $ (\tfrac{v_0v^2_1\log(1+d/v_0)}{n})^{1/2} $ shows that in the very high-dimensional case, it is the $ v_0v_1^2/n $ to a half-power that controls the rate (to within a logarithmic factor). The $ v_0 $ and $ v_1 $, as $ \ell_1 $ norms of the inner and outer coefficient vectors, have the interpretations as the effective dimensions of these vectors. Indeed, a vector in $ \mathbb{R}^d $ with bounded coefficients that has $ v_0 $ non-negligible coordinates has $ \ell_1 $ norm of thin order. These rates confirm that it is the power of these effective dimensions over sample size $ n $ (instead of the full ambient dimension $ d $) that controls the main behavior of the statistical risk. Our lower bounds on packing numbers complement the upper bound covering numbers in \cite{Bartlett1998} and \cite{Klusowski2016}.

Our rates are akin to those obtained by the authors in \cite{Wainwright2011} for high-dimensional linear regression. However, there is an important difference. The richness of $ \mathcal{F}_{v_0,v_1} $ is largely determined by the sizes of $ v_0 $ and $ v_1 $ and $ \mathcal{F}_{v_0,v_1} $ more flexibly represents a larger class of functions. It would be interesting to see if the $ 1/2 $ power in the lower minimax rates in \prettyref{eq:lowerhighdim} could be further improved to match or get near \prettyref{eq:upperhighdim}.

\bibliographystyle{plain}
\bibliography{ridgelowerref}

\end{document}